\documentclass{article}





\usepackage[nonatbib, final]{neurips_2019}

\usepackage[utf8]{inputenc} 
\usepackage[T1]{fontenc}    
\usepackage{hyperref}       
\usepackage{url}            
\usepackage{booktabs}       
\usepackage{amsfonts}       
\usepackage{nicefrac}       
\usepackage{microtype}      

\usepackage{amsmath}
\usepackage{graphicx}
\usepackage{algorithm}
\usepackage{mathtools}
\usepackage{algorithmic}
\usepackage{subfigure}
\usepackage{float}
\usepackage{floatflt}
\usepackage{wrapfig}
\usepackage{color}

\newtheorem{lemma}{Lemma}
\newtheorem{proof}{Proof}[section]

\title{Hierarchical Reinforcement Learning with Advantage-Based Auxiliary Rewards}

%

\makeatletter
\newcommand{\printfnsymbol}[1]{%
  \textsuperscript{\@fnsymbol{#1}}%
}
\makeatother

\author{%
	Siyuan Li\thanks{Denotes equal contribution} \\
	 IIIS, Tsinghua University\\
	\texttt{sy-li17@mails.tsinghua.edu.cn} \\
	\And
	Rui Wang\printfnsymbol{1} \\
	Tsinghua University \\
	\texttt{rui1@stanford.edu} \\
	\AND
	Minxue Tang \\
	Tsinghua University \\
	\texttt{tangmx16@mails.tsinghua.edu.cn} \\
	\And
	Chongjie Zhang \\
	IIIS, Tsinghua University\\
	\texttt{chongjie@tsinghua.edu.cn} \\
}

\begin{document}
	
	\maketitle
	
	\begin{abstract}
	Hierarchical Reinforcement Learning (HRL) is a promising approach to solving long-horizon problems with sparse and delayed rewards.
Many existing HRL algorithms either use pre-trained low-level skills that are unadaptable, or require domain-specific information to define low-level rewards. In this paper, we aim to adapt low-level skills to downstream tasks while maintaining the generality of reward design. We propose an HRL framework which sets auxiliary rewards for low-level skill training based on the advantage function of the high-level policy.
This auxiliary reward enables efficient, simultaneous learning of the high-level policy and low-level skills without using task-specific knowledge.
In addition, we also theoretically prove that 
optimizing low-level skills with this auxiliary reward will increase the task return for the joint policy.
Experimental results show that our algorithm dramatically outperforms other state-of-the-art HRL methods in Mujoco domains\footnote{Videos available at: \url{http://bit.ly/2JxA0eN}}. We also find both low-level and high-level policies trained by our algorithm transferable.

	\end{abstract}
	
\section{Introduction}

Reinforcement Learning (RL) \cite{sutton1998reinforcement} has achieved considerable successes in domains such as games \cite{mnih2015human,mnih2016asynchronous} and continuous control for robotics \cite{gu2017deep,tai2017virtual}.
Learning policies in long-horizon tasks with delayed rewards, such as robot navigation, is one of the major challenges for RL.
Hierarchically-structured policies, which allow for control at multiple time scales, have shown their strengths in these challenging tasks \cite{al2015hierarchical}.
In addition, Hierarchical Reinforcement Learning (HRL) methods also provide a promising way to support low-level skill reuse in transfer learning \cite{shu2017hierarchical}. 

The subgoal-based HRL methods have recently been proposed and demonstrated great performance in sparse reward problems, where the high-level policy specifies subgoals for low-level skills to learn \cite{HAC,HIRO,Tenenbaum2016NIPS,feudal}. However,
the performances of these methods heavily depend on the careful goal space design \cite{sensitive_to_goal_space}.
Another line of HRL research adopts a pre-training approach that first learns a set of low-level skills with some form of proxy rewards, e.g., by maximizing velocity \cite{SNN4hrl}, by designing some simple, atomic tasks \cite{Learning_and_Transfer_of_Modulated_Locomotor_Controllers}, or by maximizing entropy-based diversity objective \cite{DIYAN}. 
These methods then proceed to learn a high-level policy to select pre-trained skills in downstream tasks, and each selected skill is executed for a fixed number of steps. However, using fixed pre-trained skills without further adaptation is often not sufficient for solving the downstream task, since proxy rewards for pre-training may not be well aligned with the task.

To address these challenges, we develop a novel HRL approach with Advantage-based Auxiliary Rewards (HAAR) to enable concurrent learning of both high-level and low-level policies in continuous control tasks. HAAR specifies auxiliary rewards for low-level skill learning based on the advantage function of the high-level policy, without using domain-specific information. As a result, unlike subgoal-based HRL methods, the low-level skills learned by HAAR are environment-agnostic, enabling their further transfer to similar tasks. In addition, we have also formally shown that the monotonic improvement property of the policy optimization algorithm, such as TRPO \cite{TRPO}, is inherited by our method which concurrently optimizes the joint policy of high-level and low-level. To obtain a good estimation of the high-level advantage function at early stages of training, HAAR leverages state-of-the-art skill discovery techniques \cite{SNN4hrl,Learning_and_Transfer_of_Modulated_Locomotor_Controllers,DIYAN} to provide a useful and diverse initial low-level skill set, with which high-level training is accelerated.
Furthermore, to make the best use of low-level skills, HAAR adopts an annealing technique that sets a large execution length at the beginning, and then gradually reduces it for more fine-grained execution.

We compare our method with state-of-the-art HRL algorithms on the benchmarking Mujoco tasks with sparse rewards \cite{2016arXiv160406778D}.
Experimental results demonstrate that (1) our method significantly outperforms previous algorithms and our auxiliary rewards help low-level skills adapt to downstream tasks better; (2)  annealing skill length  accelerates the learning process; and (3) both high-level policy and low-level adapted skills are transferable to new tasks.

\section{Preliminaries}\label{prelim}
	A reinforcement learning  problem can be modeled by  a Markov Decision Process (MDP), which consists of a state set $S$,  an action set $A$, a transition function $T$, a  reward function $R$, and a discount factor $\gamma \in [0, 1]$. A policy $\pi(a|s)$ specifies an action distribution for each state.
	The state value function $V_\pi(s)$ for policy $\pi$ is the expected return $V_\pi(s)=\mathbb{E}_{\pi}[\sum_{i=0}^\infty\gamma^ir_{t+i}|s_t=s]$.
	The objective of an RL algorithm in the episodic case is to maximize the $V_\pi(s_0)$, where $s_0 \sim \rho_0(s_0)$.
	
	HRL methods use multiple layers of policies to interact jointly with the environment.
	We overload all notations in standard RL setting by adding superscripts or subscripts. 
	In a two-layer structure, the joint policy $\pi_{joint}$ is composed of a high-level policy $\pi_h(a^h|s^h)$ and low-level skills $\pi_l(a^l|s^l, a^h)$.
	Notice that the state spaces of the high level and low level are different. Similar to \cite{SNN4hrl,konidaris2006autonomous,konidaris2007building,gupta2017learning}, we factor the state space $S$ into task-independent space $S_{agent}$ and task-related space $S_{rest}$, with which we define $s^l \in S_{agent}$ and $s^h \in S$.
	The high-level action space in our settings is discrete and $a^h$ is a one-hot vector, which refers to a low-level skill.
		A low-level skill is a subpolicy conditioned on $a^h$ that alters states in a consistent way \cite{DIYAN,konidaris2007building}.
	$\gamma_h$ and $\gamma_l$ denote the discount factor for high and low levels, respectively.
	
\label{algo}
	 \begin{figure}[h] 
		\centering
		\includegraphics[scale=0.5]{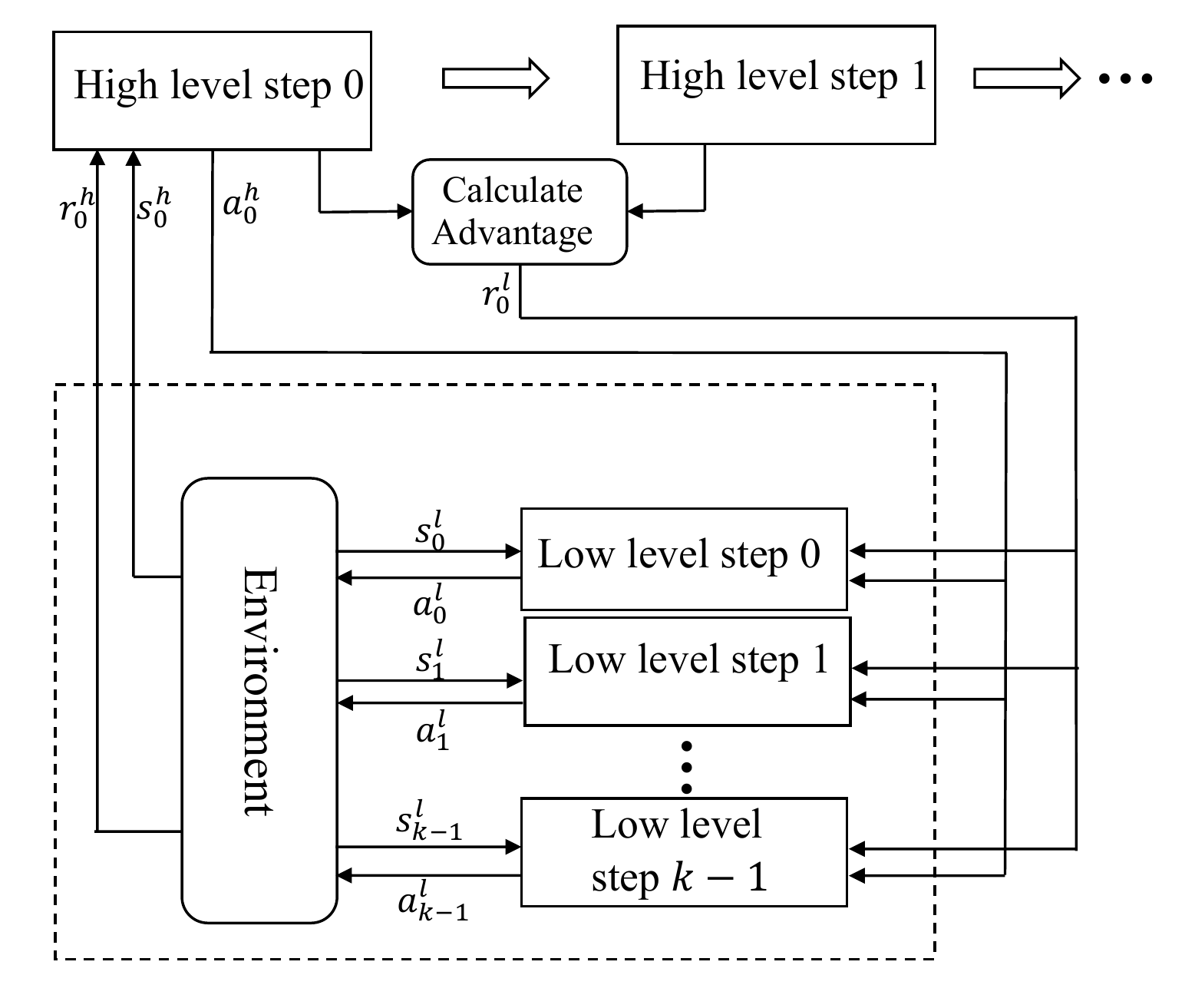}
		\caption{A schematic illustration of HAAR carried out in one high-level step. Within high-level step $0$, a total of $k$ low-level steps are taken. Then the process continues to high-level step $1$ and everything in the dashed box is repeated.}
		\label{fig:algorithm_scheme}
	\end{figure}

\section{Method}
\label{method}
In this section, we present our algorithm, HRL with Advantage function-based Auxiliary Rewards (HAAR). First we describe the framework of our algorithm. Second, we formally define the advantage-based auxiliary reward. Finally, we theoretically prove that the monotonicity of the optimization algorithm used for each level's training is retained for the joint policy.

\subsection{The HAAR Learning Framework}	
\label{sec:learning_framework}

 Figure \ref{fig:algorithm_scheme} illustrates the execution cycle of HAAR. At timestep $i$,
 an agent on state $s_i^h$ takes a high-level action $a^h_i$ encoded by a one-hot vector. $\pi_l$ is a neural network which takes $a^h_i$ and state $s_i^l$ as inputs, and outputs a low-level action $a_i^l$. Different low-level skills are distinguished by different $a^h_i$ injected into this neural network. 
 In this way, a single neural network $\pi_l$ can encode all the low-level skills  \cite{SNN4hrl}. The selected low-level skill will be executed for $k$ steps, i.e., $a_j^h=a_i^h(i\leq j<i+k)$. After this, the high-level policy outputs a new action.
The high-level reward $r_t^h$  is the cumulative  $k$-step environment return, i.e., $r_t^h = \sum_{i=0}^{k-1}r_{t+i}$. The low-level reward $r_t^l$ is the auxiliary reward computed from the high-level advantage function, which will be discussed in detail in the next section.

	\begin{algorithm}[bp]
		\caption{HAAR algorithm} \label{alg1}
		\begin{algorithmic}[1]
		    \STATE{Pre-train low-level skills  $\pi_l$.}
			\STATE{Initialize high-level policy $\pi_h$ with a random policy.}
			\STATE{Initialize the initial skill length $k_1$ and the shortest skill length $k_s$.}
			\FOR{i $\in\{1,...,N\}$}
			\STATE{Collect experiences following the scheme in Figure \ref{fig:algorithm_scheme}, under $\pi_h$ and $\pi_l$ for $T$ low-level steps.}
			\STATE{Modify low-level experience with auxiliary reward $r_t^l$ defined in Equation (\ref{eq:r_definition}).}
			\STATE{Optimize $\pi_h$ with the high-level experience of the $i$-th iteration.}
			\STATE{Optimize $\pi_l$ with the modified low-level experience of the $i$-th iteration.}
			\STATE{$k_{i+1} = max(f(k_i), k_s)$.}
			\ENDFOR
			\STATE {\bfseries return } $\pi_h, \pi_l$.
		\end{algorithmic}
	\end{algorithm}
	
	Algorithm \ref{alg1} shows the learning procedure of HAAR. In each iteration, we first sample a batch of $T$ low-level time steps by running the joint policy $\pi_{joint}$ in the way shown in Figure \ref{fig:algorithm_scheme} (Line 5). Then we calculate the auxiliary reward   $r_t^l$ introduced in the next section and replace the environment reward  $r_t$ with $r_t^l$ in the low-level experience as $\{s^l_t, a^l_t, r^l_t, s^l_{t+1}\}$ for $0\leq t < T$ (Line 6). Finally we update $\pi_h$ and $\pi_l$ with Trust Region Policy Optimization (TRPO) \cite{TRPO} using the modified experience of the current iteration (Line 7, 8). In fact, we can use any actor-critic policy gradient algorithm \cite{sutton1985temporal} that improves the policy with approximate monotonicity.
	
	In most previous works of skill learning and hierarchical learning \cite{SNN4hrl,MLSH,HAC}, the skill length $k$ is fixed. When $k$ is too small, the horizon for the high-level policy will be long and the agent explores slowly. When $k$ is too large, the high-level policy becomes inflexible, hence a non-optimal policy. To balance exploration and optimality, we develop a skill length annealing method (Line 9). The skill length $k_i$ is annealed with the iteration number $i$ as $k_i=k_1 e^{-\tau i}$,
	where $k_1$ is the initial skill length and $\tau$ is the annealing temperature. We define a shortest length $k_s$ such that when $k_i < k_s$, we set $k_i$ to $k_s$ to prevent the skills from collapsing into a single action.
	
	\subsection{Advantage Function-Based Auxiliary Reward}\label{auxiliary_reward}
	
	  The sparse environment rewards alone can hardly provide enough supervision to adapt low-level skills to downstream tasks. Here we utilize the high-level advantage functions to set auxiliary  rewards for low-level skills.
	The advantage function \cite{schulman2015high}  of high-level action $a_t^h$ at state $s_t^h$ is defined as
	\begin{align*}
	&A_h(s^h, a^h) = \mathbb{E}_{s_{t+k}^h\sim(\pi_h, \pi_l)}[r_t^h + \gamma_h V_h(s_{t+k}^h)| a_{t}^h=a^h ,s_{t}^h=s^h] - V_h(s^h).
	\end{align*}
	To encourage the selected low-level skills to reach states with greater values, we set the estimated high-level advantage function as our auxiliary rewards to the low-level skills.
	\begin{align*}
	R_l^{s_t^h,a_t^h}(s_t^l..s_{t+k-1}^l)=A_h(s_t^h,a_t^h),
	\end{align*}
	where $R_l^{s_t^h,a_t^h}(s_t^l..s_{t+k-1}^l)$ denotes the sum of $k$-step auxiliary rewards under the high-level state-action pair $(s^h_t, a_t^h)$. For simplicity, We do a one-step estimation of the advantage function in Equation (\ref{eq:r_definition}). As the low-level skill is task-agnostic and do not distinguish between high-level states, we split the total auxiliary reward evenly among each low-level step, i.e., we have
	\begin{align}
	\underset{t\leq  i < t+k}{r_i^l}=&\frac{1}{k} A_h(s_t^h,a_t^h)\label{eq:r_definition}\\
	=&\frac{r_t^h+\gamma_h V_h(s_{t+k}^h)-V_h(s_t^h)}{k}.
	\end{align}
	
	An intuitive interpretation of this auxiliary reward function is that, when the temporally-extended execution of skills quickly backs up the sparse environment rewards to high-level states, we can utilize the high-level value functions to guide the learning of low-level skills.
	
	In order to obtain meaningful high-level advantage functions at early stages of training, we pre-train low-level skills $\pi_l(a_t^l|s_t^l, a_t^h)$ with one of the existing skill discovery algorithms \cite{SNN4hrl,DIYAN} to obtain a diverse initial skill set. This skill set is likely to contain some useful but imperfect skills. With these pre-trained skills, the agent explores more efficiently, which helps the estimate of high-level value functions.

    \subsection{Monotonic Improvement of Joint Policy}\label{proof}
	In this section we show that HAAR retains the monotonicity of the optimization algorithm used for each level's training, and improves the joint policy monotonically. Notice that in HAAR, low-level skills are optimized w.r.t the objective defined by auxiliary rewards instead of environment rewards. Nevertheless, optimizing this objective will still lead to increase in joint policy objective.
 This conclusion holds under the condition that (1) the optimization algorithms for both the high and low level guarantee monotonic improvement w.r.t their respective objectives; (2) the algorithm used in the proof is slightly different from Algorithm \ref{alg1}: in one iteration $\pi_l$ is fixed while optimizing $\pi_h$, and vice versa; and (3) discount factors $\gamma_h, \gamma_l$ are close to $1$.
    
    We define the expected start state value as our objective function to maximize (for convenience, we use $\pi$ in place of $\pi_{joint}$ )
    \begin{align}
        \eta(\pi)=\eta(\pi_h,\pi_l)=\mathbb{E}_{(s_t^h,a_t^h)\sim(\pi_h,\pi_l)}\Bigg[\sum_{t = 0, k, 2k, ...} \gamma_h^{t/k} r_h(s_t^h, a_t^h)\Bigg].
        \label{eq:joint_obj}
    \end{align}
    
    First, we assert that the optimization of the high level policy $\pi_h$ with fixed low level policy $\pi_l$ leads to improvement in the joint policy. Since the reward for high level policy is also the reward for the joint policy, fixing $\pi_l$ in $\eqref{eq:joint_obj}$, it essentially becomes the expression for $\eta_h(\pi_h)$. Therefore, $\pi_h$ and $\pi$ share the same objective when we are optimizing the former. Namely, when $\pi_l$ is fixed, maximizing $\eta_h(\pi_h)$ is equivalent to maximizing $\eta(\pi)$.

    Now we consider the update for the low-level policy. We can write the objective of the new joint policy $\tilde\pi$ in terms of its advantage over $\pi$ as (proved in Lemma \ref{lemma4} in the appendix)
    \begin{equation}
        \begin{aligned}
        \eta(\tilde\pi) =  \eta(\pi)+ \mathbb{E}_{(s_t^h,a_t^h)\sim \tilde\pi}
        \Bigg[\sum_{t=0,k,2k,...} \gamma_h^{t/k} A_h(s_t^h, a_t^h)\Bigg].
        \end{aligned}
    \end{equation}
    Since $\eta(\pi)$ is independent of $\eta(\tilde{\pi})$, we can express the optimization of the joint policy as
    \begin{equation}\label{eq:jp_oj}
        \max_{\tilde{\pi}} \eta(\tilde\pi)=\max_{\tilde{\pi}} 
        \mathbb{E}_{(s_t^h,a_t^h)\sim\tilde\pi}
        \Bigg[\sum_{t=0,k,2k,...} \gamma_h^{t/k} A_h(s_t^h, a_t^h)\Bigg].
    \end{equation}

    Let $\tilde \pi_l$ denote a new low-level policy. In the episodic case, the optimization algorithm for $\tilde\pi_l$ tries to maximize
    \begin{align}
        \eta_l(\tilde\pi_l) = \mathbb{E}_{s_0^l \sim \rho_0^l}[V_l(s_0^l)] = \mathbb{E}_{(s^l_t, a^l_t)\sim(\tilde\pi_l,\pi_h)}\Bigg[\sum_{t = 0, 1, 2, ...} \gamma_l^{t} r_l(s_t^l, a_t^l)\Bigg].
        \label{eq:low_level_obj}
    \end{align}
    Recall our definition of low-level reward in Equation \eqref{eq:r_definition} and substitute it into Equation \eqref{eq:low_level_obj}, we have (detailed derivation can be found in Lemma \ref{lemma3} in the appendix)
    \begin{equation}\label{eq:ll_oj}
        \begin{aligned}
            \eta_l(\tilde\pi_l) = \mathbb{E}_{s_0^l}[V_l(s_0^l)] \approx \frac{1-\gamma_l^k}{1-\gamma_l}\mathbb{E}_{(s_t^h, a_t^h)\sim(\tilde\pi_l,\pi_h)}\Bigg[\sum_{t=0,k,2k,...}\gamma_h^{t/k}A_h(s_t^h, a_t^h)\Bigg].
        \end{aligned}
    \end{equation} 
    Notice how we made an approximation in Equation \eqref{eq:ll_oj}. This approximation is valid when $\gamma_h$ and $\gamma_l$ are close to $1$ and $k$ is not exceptionally large (see Lemma \ref{lemma3}). These requirements are satisfied in typical scenarios. Now let us compare this objective (\ref{eq:ll_oj}) with the objective in (\ref{eq:jp_oj}). Since $\frac{1-\gamma_l^k}{1-\gamma_l}$ is a positive constant, we argue that increasing (\ref{eq:ll_oj}), which is the objective function of the low level policy, will also improve the objective of the joint policy $\pi$ in (\ref{eq:jp_oj}).
    
    In summary, our updating scheme results in monotonic improvement of $\eta(\pi_{joint})$ if we can monotonically improve the high-level policy and the low-level policy with respect to their own objectives. In practice, we use TRPO \cite{TRPO} as our optimization algorithm.  
    
    Recall that we make an important assumption at the beginning of the proof. We assume $\pi_h$ is fixed when we optimize $\pi_l$, and $\pi_l$ is fixed when we optimize $\pi_h$. This assumption holds if, with a batch of experience, we optimize either $\pi_h$ or $\pi_l$, but not both. However, this may likely reduce the sample efficiency by half. We find out empirically that optimizing both $\pi_l$ and $\pi_h$ with one batch does not downgrade the performance of our algorithm (see Appendix \ref{appendix:concurrent}). In fact, optimizing both policies with one batch is approximately twice as fast as collecting experience and optimizing either policy alternately. Therefore, in the practical HAAR algorithm we optimize both high-level and low-level policies with one batch of experience.

\section{Related Work}
HRL has long been recognized as an effective way to solve long-horizon and sparse reward problems \cite{FUN,Sutton:1999,HRL_with_maxQ,Barto_HRL}.
Recent works have proposed many HRL methods to learn policies for continuous control tasks with sparse rewards \cite{HAC,HIRO,SNN4hrl,Learning_and_Transfer_of_Modulated_Locomotor_Controllers}. Here we roughly classify these algorithms into two categories. 
The first category lets a high-level policy \textit{select} a low-level skill to execute \cite{SNN4hrl,Learning_and_Transfer_of_Modulated_Locomotor_Controllers,DIYAN,CAPS}, which we refer to as the \textit{selector methods}. 
In the second category, a subgoal is set for the low level by high-level policies \cite{HAC,HIRO,feudal,Tenenbaum2016NIPS,Sutton:1999}, which we refer to as \textit{subgoal-based methods}.

 Selector methods enable convenient skill transferring and can solve diverse tasks. They often require training of high-level and low-level policies within different environments,  where the low-level skills are pre-trained either by proxy reward \cite{SNN4hrl}, by maximizing diversity \cite{DIYAN},  or  in  designed simple tasks \cite{Learning_and_Transfer_of_Modulated_Locomotor_Controllers}. For hierarchical tasks,
 low-level skills are  frozen and only high-level policies are trained.
However, the frozen low-level skills may not be good enough for all future tasks.
\cite{MLSH} makes an effort to jointly train high-level and low-level policies, 
but the high-level training is restricted to a certain number of steps due to approximations made in the optimization algorithm. 
\cite{SNN4hrl} mentions a potential method to train two levels jointly with a Gumble-Softmax estimator \cite{categorical_gradient}.
The Option-Critic algorithm \cite{option-critic} also trains two levels jointly. However, as noted by \cite{feudal}, joint training may lead to loss of semantic meaning of the output of high policies. Therefore, the resulted joint policy in \cite{option-critic} may degenerate into a deterministic policy or a primitive policy (also pointed out in \cite{TRPO}), losing strengths brought by hierarchical structures. To avoid these problems, our algorithm, HAAR, trains both policies concurrently (simultaneously, but in two optimization steps).
Furthermore, these joint training algorithms do not work well for tasks with sparse rewards, because training low-level skills requires dense reward signals.

Subgoal-based methods are designed to solve sparse reward problems. A distance measure is required in order for low-level policies to receive internal rewards according to its current state and the subgoal. Many algorithms simply use Euclidean distance \cite{HAC,HIRO} or cosine-distance \cite{feudal} as measurements. However, these distance measure within state space does not necessarily reflect the ``true'' distance between two states \cite{goal-conditioned}. Therefore these algorithms are sensitive to state space representation \cite{sensitive_to_goal_space}. To resolve this issue, \cite{goal-conditioned} proposes to use actionable state representation, while \cite{goal_repr_learning} learns to map the original goal space to a new space with a neural network. Our algorithm HAAR, on the other hand, manages to avoid the thorny problem of finding a proper state representation. By using only the advantage function for reward calculation, HAAR remains domain-independent and works under \textit{any} state representation.

The way we set auxiliary rewards share some similarities with the potential-based reward shaping methods \cite{devlin2012dynamic}, which relies on heuristic knowledge to design a potential reward function to facilitate the learning. In contrast, our method requires no prior knowledge and is able to take advantage of the hierarchical structure.

\section{Experiments}
\subsection{Environment Setup}

We adapt the benchmarking hierarchical tasks introduced in \cite{2016arXiv160406778D} to test HAAR. 
We design the observation space such that the low-level skills are task-agnostic, while the high-level policy is as general as possible. 
The low level only has access to the agent's joint angles, stored in $s^l$. This choice of low-level observation necessitates minimal domain knowledge in the pre-training phase, such that the skill can be transferred to a diverse set of domains. This is also discussed in \cite{SNN4hrl}. 
The high level can perceive the walls/goals/other objects by means of seeing through a range sensor - 20 ``rays'' originating from the agent, apart from knowing its own joint angles, all this information being concatenated into $s^h$.
To distinguish between states, the goal can always be seen regardless of walls.

Note that unlike most other experiments, the agent does not have access to any information that directly reveals its absolute coordinates ($x, y$ coordinates or top-down view, as commonly used in HRL research experiments). This makes our tasks more challenging, but alleviates over-fitting of the environment and introduces potential transferability to both $\pi_h$ and $\pi_l$, which we will detail on later.  We compare our algorithm to prior methods in the following tasks:
\begin{itemize}
    \item  Ant Maze: The ant is rewarded for reaching the specified position in a maze shown in Figure \ref{fig:ant_maze}(a). We randomize the start position of the ant to acquire even sampling of states.
    \item Swimmer Maze: The swimmer is rewarded for reaching the goal position in a maze shown in Figure \ref{fig:ant_maze}(b).
    \item Ant Gather: The ant is rewarded for collecting the food distributed in a finite area while punished for touching the bombs, as shown in Figure \ref{fig:ant_maze}(c).
\end{itemize}

\begin{figure}[!ht]
        
    \centering
    \includegraphics[width=0.9\textwidth]{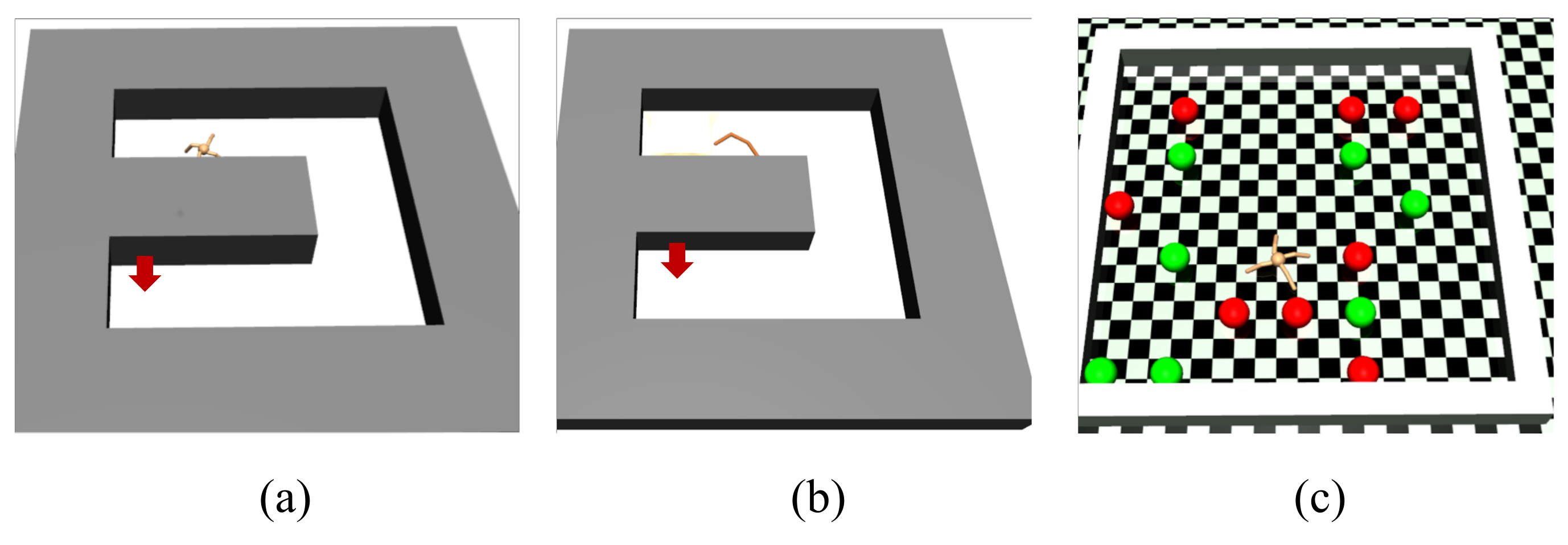}
    \caption{A collection of environments that we use. (a) Ant in maze (b) Swimmer in maze (c) Ant in gathering task.}
    \label{fig:ant_maze}
        
\end{figure}

\subsection{Results and Comparisons}
We compare our algorithm with the state-of-the-art HRL method SNN4HRL \cite{SNN4hrl}, 
 subgoal-based methods HAC \cite{HAC} and HIRO \cite{HIRO} and non-hierarchical method TRPO \cite{TRPO} in the tasks above. 
HAAR and SNN4HRL share the same set of pre-trained low-level skills with stochastic neural network.
HAAR significantly outperforms baseline methods. Some of the results are shown in Figure \ref{term1}. All curves are averaged over 5 runs and the shaded error bars represent a confidence interval of $95\%$. In all the experiments, we include a separate learning curve of HAAR without annealing the low-level skill length, to study the effect of annealing on training. Our full implementation details are available in Appendix \ref{params}.

\textbf{Comparison with SNN4HRL and Non-Hierarchical Algorithm}

Compared with SNN4HRL, HAAR is able to learn faster and achieve higher convergence values in all the tasks\footnote{In our comparative experiments, the numbers of timesteps per iteration when training with SNN4HRL is different from that in the original paper \cite{SNN4hrl}. SNN4HRL's performance, in terms of timesteps, is consistent with the original paper.}. This verifies that mere pre-trained low-level skills are not sufficient for the hierarchical tasks. 

\begin{figure}[ht]
    \centering
    \includegraphics[width=1.0\textwidth]{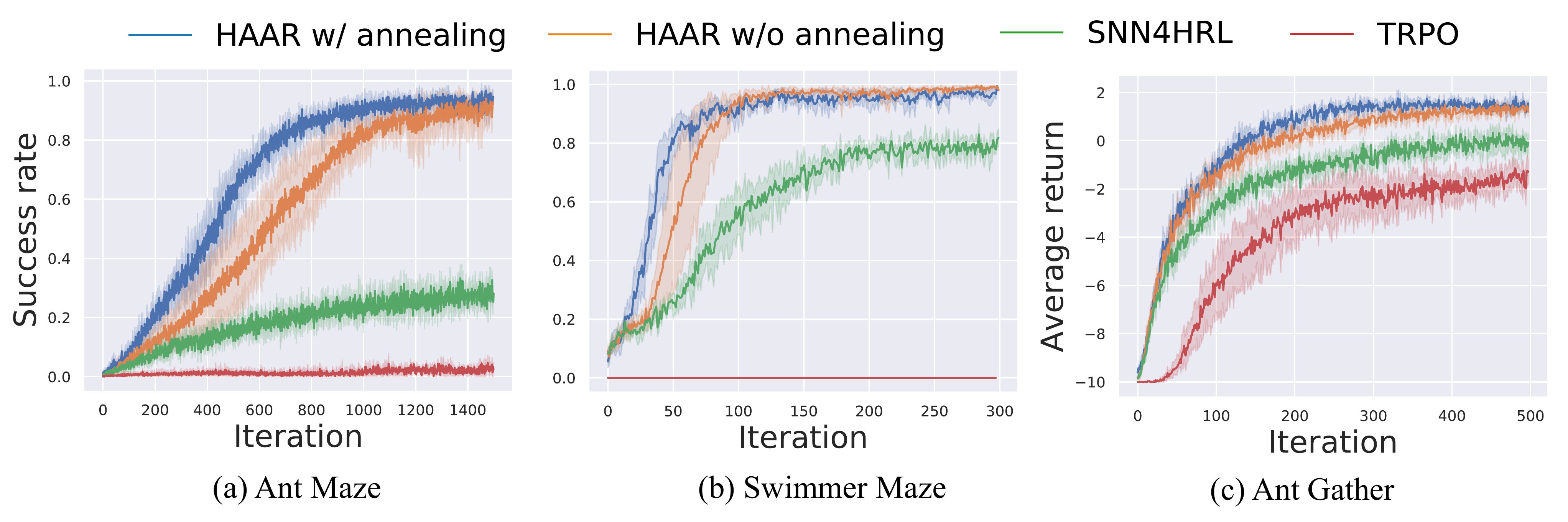}
    \caption{Learning curves of success rate or average return in Ant Maze, Swimmer Maze and Ant Gather tasks. The curves are HAAR with skill annealing, HAAR without skill length annealing, SNN4HRL and TRPO, respectively.}
    \label{term1}
\end{figure}

The success rate of SNN4HRL in the Swimmer  Maze task is higher than that of the Ant Maze task because the swimmer will not trip over even if low-level skills are not fine-tuned. Nevertheless, in Swimmer Maze, our method HAAR still outperforms SNN4HRL. HAAR reaches a success rate of almost $100\%$ after fewer than $200$ iterations.

The main challenges of Ant Gather task is not sparse rewards, but rather the complexity of the problem, as rewards in the Ant Gather task is much denser compared to the Maze environment. Nevertheless, HAAR still achieves better results than benchmark algorithms. This indicates that HAAR, though originally designed for sparse reward tasks, can also be applied in other scenarios.

TRPO is non-hierarchical and not aimed for long-horizon sparse reward problems. The success rates of TRPO in all maze tasks are almost zero. In Ant Gather task, the average return for TRPO has a rise because the ant robot learns to stay static and not fall over due to the death reward $-10$.

\textbf{Comparison with Subgoal-Based Methods} 

We also compare our method HAAR with the state-of-the-art subgoal-based HRL methods, HAC and HIRO in the Ant Maze environment. Because we use a realistic range sensor-based observation and exclude the $x, y$ coordinates from the robot observation, subgoal-based algorithms cannot properly calculate the distances between states, and perform just like the non-hierarchical method TRPO. We even simplify the Maze task by placing the goal directly in front of the ant, but it is still hard for those subgoal-based methods to learn the low-level gait skills. Therefore, we omit them from the results.

This result accords with our previous analysis of subgoal-based algorithms and is also validated by detailed studies in \cite{sensitive_to_goal_space}, which mutates state space representation less than we do, and still achieves poor performance with those algorithms.

\textbf{The Effect of Skill Length Annealing} 

HAAR without annealing adopts the same skill length as HAAR with annealing at the end of training, so that the final joint policies of two training schemes are the same in structure. The learning curves are presented in Figure \ref{term1}. In general, training with skill length annealing helps the agent learn faster. Also, annealing has no notable effect on the final outcome of training, because the final policies, with or without annealing, share the same skill length $k$ eventually. We offered an explanation for this effect at the end of the Section \ref{sec:learning_framework}.

\subsection{Visualization of Skills and Trajectories}
\label{deeper}

To demonstrate how HAAR is able to achieve such an outperformance compared to other state-of-the-art HRL methods, we provide a deeper look into the experimental results above.  In Figure \ref{detail}, we compare the low-level skills before and after training in the Ant Maze task.

\begin{figure*}[htbp]
\centering
 \centering
    \includegraphics[width=0.99\textwidth]{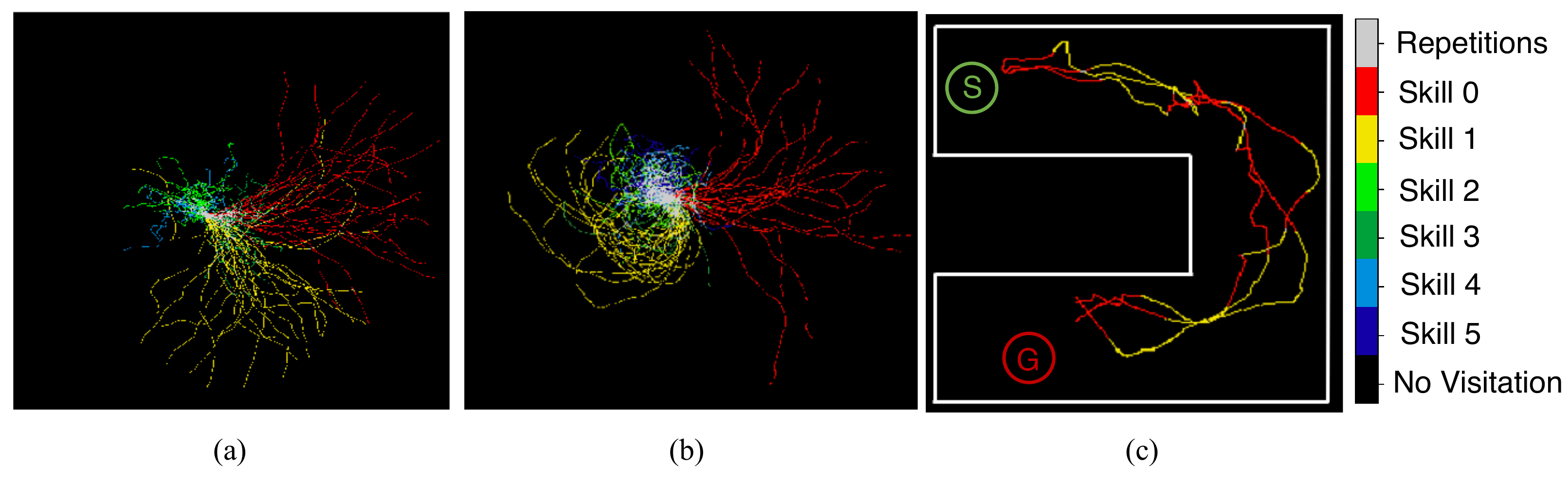}
\caption{(a) Visitation plot of initial low-level skills of the ant. (b) Low-level skills after training with auxiliary rewards in Ant Maze. (c) Sample trajectories of the ant after training with HAAR in Ant Maze.}
\label{detail}
\end{figure*}

In Figure \ref{detail}, (a) and (b) demonstrate a batch of experiences collected with low-level skills before and after training, respectively. The ant is always initialized at the center and uses a single skill to walk for an arbitrary length of time. Comparing (b) with (a), we note that the ant learns to turn right (Skill 1 in yellow) and go forward (Skill 0 in red) and well utilizes these two skills in the Maze task in (c), where it tries to go to (G) from (S). We offer analysis for other experiments in Appendix \ref{exp}.

In our framework, we make no assumption on how those initial skills are trained, and our main contribution lies in the design of auxiliary rewards to facilitate low-level control training.

\subsection{Transfer of Policies}
Interestingly, even though HAAR is not originally designed for transfer learning, we find out in experiments that both $\pi_h$ and $\pi_l$ could be transferred to similar new tasks. In this section we analyze the underlying reasons of our method's transferability.

We use the Ant Maze task shown in Figure \ref{fig:ant_maze}(a) as the source task, and design two target tasks in Figure \ref{fig:transfer_task} that are similar to the source task. Target task (a) uses a mirrored maze (as opposed to the original maze in Figure \ref{fig:ant_maze}(a)) and target task (b) is a spiral maze. Now we test the transferability of HAAR by comparing the learning curves of (1) transferring both high-level and low-level policies, (2) transferring the low-level  alone, and (3) not transferring any policy. We randomly pick a trained $\pi_l$ and its corresponding $\pi_h$ from the learned policies in the experiment shown in Figure \ref{fig:ant_maze}(a), and apply them directly on tasks in Figure \ref{fig:transfer_task}.

\begin{figure*}[htbp]
\includegraphics[width=0.99\textwidth]{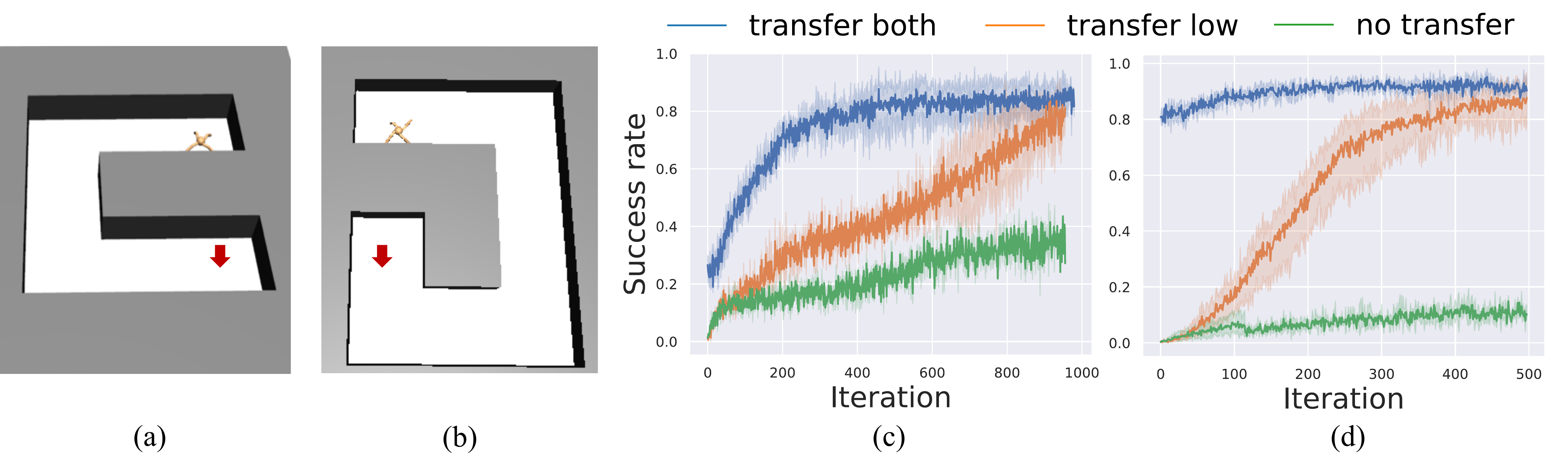}
\caption{(a) and (b) are tasks to test the transferability of learned policies. (c) and (d) are the corresponding learning curves of transferring both high and low-level policies, transferring only low-level policy, and not transferring (the raw form of HAAR).}
\label{fig:transfer_task}
\end{figure*}

HAAR makes no assumption on state space representation. Therefore, in experiments we only allow agents access to information that is universal across similar tasks. First, as defined in Section \ref{prelim}, $s^l$ is the ego-observation of the agent's joint angles. This ensures the low-level skills are unaware of its surroundings, hence limited to atomic actions and avoids the problem of the joint policy being degenerated to always using the same low-level policy \cite{feudal,option-critic}.

Apart from information in $s^l$, the high level can also perceive surrounding objects through a range sensor. We block its access to absolute coordinates so that the agent does not simply remember the environment, but learns to generalize from observation. Our experimental results in Figure \ref{fig:transfer_task} (c)(d) verifies that both low-level and high-level policies are indeed transferable and can facilitate learning in a similar new task.

For both target tasks, there is a jump start of success rate by transferring both $\pi_h$ and $\pi_l$. The learning curve of target task (b) enjoys a very high jump start due to its trajectorical similarity to the source task. Transferring both $\pi_h$ and $\pi_l$ results in very fast convergence to optimum.
  Transferring only $\pi_l$ also results in significantly faster training compared to non-transfer learning. This indicates that with HAAR, the agent learns meaningful skills in the source task (which is also analyzed by Figure \ref{detail}).  By contrast, it is unlikely for works that rely on coordinates as part of the observation, such as \cite{HIRO}, to transfer their policies.

We want to point out that as the maze we use in this experiment is simple, the agent could possibly derive its location according to its observation, therefore still over-fitting the environment to some extent. However, using more complex mazes as source tasks  may resolve this problem.

\subsection{Discussion of State Observability}
In our experiments, the decision process on the high level is clearly an MDP since states are definitive from the observation. We notice that the low level states, however, are not fully observable. Direct information about the maze(walls and the goal) is excluded from the low level. Nevertheless, indirect information about the maze is expressed through $a_h$, which is a function of wall and goal observation. Strictly speaking, the low-level dynamics is a partially observable Markov decision process. But owing to the indirect information carried in $a_h$, we use $s_l$ to approximate the complete state and still apply TRPO on it. Experimental  results verify the validity of such approximation. 
This approximation could be avoided by taking partial observability into consideration. For example, the GTRPO algorithm \cite{GTRPO} can be utilized to optimize the low-level policy.

\section{Conclusion}
In this work, we propose a novel hierarchical reinforcement learning framework, HAAR. We design a concurrent training approach for high-level and low-level policies, where both levels utilize the same batch of experience to optimize different objective functions, forming an improving joint policy. To facilitate low-level training, we design a general auxiliary reward that is dependent only on the high-level advantage function. We also discuss the transferability of trained policies under our framework, and to combine this method with transfer learning might be an interesting topic for future research. Finally, as we use TRPO for on-policy training, sample efficiency is not very high and computing power becomes a major bottleneck for our algorithm on very complex environments. To combine off-policy training with our hierarchical structure may have the potential to boost sample efficiency. As the low-level skill initialization scheme has a dramatic influence on performance, an exploration of which low-level skill initialization scheme works best is a future direction as well.

\subsubsection*{Acknowledgments}

The authors would like to thank the anonymous reviewers for their valuable comments and helpful suggestions. The work is supported by Huawei Noah’s Ark Lab under Grant No. YBN2018055043.

	\bibliographystyle{unsrt}
	\bibliography{patent}  

\newpage	
\appendix
\section{Additional Proof}
\label{proof}
\begin{lemma}
    \label{lemma4}
    The objective of a new joint policy $\tilde\pi$ can be written in terms of the advantage over $\pi$ as
    \begin{equation}
        \eta(\tilde\pi) =  \eta(\pi)+ \mathbb{E}_{(s_t^h,a_t^h)\sim\tilde\pi}
        \Bigg[\sum_{t=0,k,2k,...} \gamma_h^{t/k} A_h(s_t^h, a_t^h)\Bigg]
    \end{equation}
\end{lemma}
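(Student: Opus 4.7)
My plan is to mirror the classical performance difference lemma of Kakade and Langford, lifted to the semi-Markov decision process in which each high-level ``step'' consists of $k$ primitive steps with cumulative reward $r_h$ and effective discount $\gamma_h$. The two ingredients are (i) pulling $V_h(s^h_0)$ inside the expectation over $\tilde\pi$-trajectories, and (ii) telescoping $V_h(s^h_0)$ along the sampled trajectory so that the term-by-term differences collapse into high-level advantages.

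Step by step: First, I would use the fact that $s^h_0$ has a policy-independent initial distribution to rewrite $\eta(\pi)=\mathbb{E}_{\tau\sim\tilde\pi}[V_h(s^h_0)]$, so that $\eta(\tilde\pi)-\eta(\pi)$ becomes a single expectation over trajectories generated by $\tilde\pi$. Second, I would introduce the standard telescoping identity $V_h(s^h_0)=\sum_{t=0,k,2k,\ldots}\bigl(\gamma_h^{t/k}V_h(s^h_t)-\gamma_h^{(t+k)/k}V_h(s^h_{t+k})\bigr)$, which is valid in the episodic or discounted case since the tail $\gamma_h^{t/k}V_h(s^h_t)$ vanishes. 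Substituting this and combining with the reward sum yields a single expectation whose summand is $\gamma_h^{t/k}\bigl(r_h(s^h_t,a^h_t)+\gamma_h V_h(s^h_{t+k})-V_h(s^h_t)\bigr)$. Third, I would invoke the tower property to condition each bracketed term on $(s^h_t,a^h_t)$, which reduces the inner expression to $A_h(s^h_t,a^h_t)$ as defined in Section~\ref{auxiliary_reward}, completing the proof.

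The main subtlety, and the one step I would handle most carefully, is the tower-property collapse at the end. In a hierarchical MDP the high-level ``transition'' $s^h_t\mapsto s^h_{t+k}$ is not intrinsic: it is induced by whichever low-level policy executes the $k$ intermediate primitive steps. Because the surrounding proof explicitly restricts attention to one-level-at-a-time updates, the level that was not updated is the same under both $\pi$ and $\tilde\pi$, so the conditional distribution of $s^h_{t+k}$ along a $\tilde\pi$-trajectory matches the one used to define $A_h$ under $\pi$, and the collapse is legitimate. I would state this assumption explicitly at the start of the proof so that the reader is not left wondering whose low-level dynamics appear inside $A_h$, and so that the lemma's role within the monotonic-improvement argument in Section~\ref{proof} is unambiguous.
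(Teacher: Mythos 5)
Your argument is the same telescoping performance-difference computation the paper uses: expand $A_h$ as $r_h+\gamma_h V_h(s^h_{t+k})-V_h(s^h_t)$, telescope the value terms along the high-level trajectory, and use the policy-independence of the initial state distribution to identify the leftover $-\mathbb{E}[V_h(s^h_0)]$ with $-\eta(\pi)$. You run the computation from $\eta(\tilde\pi)-\eta(\pi)$ toward the advantage sum while the paper runs it in the reverse direction, but the content is identical.

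The one place you go beyond the paper is also the one place your reasoning slips. You are right that the tower-property collapse is the delicate step, because the high-level ``transition'' $s^h_t\mapsto s^h_{t+k}$ (and the cumulative reward $r^h_t$) is induced by the low-level policy rather than being intrinsic to an MDP. But your resolution --- that the non-updated level is shared by $\pi$ and $\tilde\pi$, so the conditional law of $s^h_{t+k}$ matches the one defining $A_h$ --- is backwards in exactly the case where the lemma is invoked. The main text applies this lemma to the \emph{low-level} update, $\tilde\pi=(\pi_h,\tilde\pi_l)$, and it is precisely $\pi_l$ that induces the conditional distribution of $s^h_{t+k}$ given $(s^h_t,a^h_t)$; changing $\pi_l$ changes that kernel, so conditioning along a $\tilde\pi$-trajectory does not reproduce the $A_h$ of the paper's definition, which takes $s^h_{t+k}\sim(\pi_h,\pi_l)$. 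The identity remains exact only if $A_h(s^h_t,a^h_t)$ inside the expectation is read as the one-sample quantity $r^h_t+\gamma_h V_h(s^h_{t+k})-V_h(s^h_t)$ evaluated along the sampled $\tilde\pi$-trajectory, with $V_h$ still the value function of $\pi$ --- which is what the paper's proof implicitly does and what the estimator in Equation (\ref{eq:r_definition}) computes in practice. You should either adopt that reading explicitly or note that, for the low-level update, the conditional-expectation form of $A_h$ introduces an additional approximation that neither your write-up nor the paper currently accounts for.
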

\begin{proof}
    For an objective function defined as 
    \begin{equation}
        \begin{aligned}
        \eta(\pi) = &\mathbb{E}_{s_0^h}[V_h(s_0^h)]\\
        =&\mathbb{E}_{s_0^h, a_0^h, ... \sim \pi}\Bigg[\sum_{t=0,k,2k,...} \gamma_h^{t/k} r_h(s_t^h)\Bigg]
        \end{aligned}
    \end{equation}
    Changing $\pi$ to $\tilde\pi$, we have
    \begin{equation}
        \begin{aligned}
            &\mathbb{E}_{s_0^h, a_0^h, ... \sim \tilde\pi}\Bigg[\sum_{t=0,k,2k,...} \gamma_h^{t/k} A_h(s_t^h, a_t^h)\Bigg]\\
            =& \mathbb{E}_{s_0^h, a_0^h, ... \sim \tilde\pi}\Bigg[
                \sum_{t = 0, k, 2k, ...}\gamma_h^{t/k}(r_h(s_t^h)+\gamma_h V_h(s_{t+k}^h) - V_h(s_t^h))
                \Bigg]\\
            =& \mathbb{E}_{s_0^h, a_0^h, ... \sim \tilde\pi}\Bigg[
                -V_h(s_0^h) + \sum_{t=0,k,2k,...} \gamma_h^{t/k} r_h(s_t^h)
                \Bigg]\\
            =& -\eta(\pi) + \eta(\tilde\pi)
        \end{aligned}
    \end{equation}
    Rearranging two sides of the equation, we obtain
    \begin{equation}
        \begin{aligned}
        \eta(\tilde\pi) =  \eta(\pi)+\mathbb{E}_{(s_t^h,a_t^h)\sim\tilde\pi}
        \Bigg[\sum_{t=0,k,2k,...} \gamma_h^{t/k} A_h(s_t^h, a_t^h)\Bigg]
        \end{aligned}
    \end{equation}
\end{proof}

\begin{lemma}
    \label{lemma3}
    With the auxiliary reward of
    $\underset{t\leq  i < t+k}{r_i^l}=\frac{1}{k} A_h(s_t^h,a_t^h)$,
    the low level policy $\tilde\pi_l$'s expected start value can be written as
    \begin{equation}
        \begin{aligned}
         \mathbb{E}_{s_0^l}[V_l(s_0^l)] \approx \frac{1-\gamma_l^k}{1-\gamma_l}\mathbb{E}_{\tau_h\sim(\tilde\pi_l,\pi_h)}\Bigg[\sum_{t=0,k,2k,...}\gamma_h^{t/k}A_h(s_t^h, a_t^h)\Bigg]
        \end{aligned}
    \end{equation}
    Where $\tau_h$ is the trajectory of high-level steps $ = s_0^h,a_0^h,s_k^h,a_k^h,...$.
    The approximation is correct under the condition that the fixed high level policy $\pi_h$, the low level discount factor $\gamma_l$ as well as the high level discount factor $\gamma_h$ are both close to $1$, and the skill length $k$ is not extremely large.
\end{lemma}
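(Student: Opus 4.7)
The plan is to expand the low-level expected return as a discounted sum over low-level time steps, group those steps into blocks of size $k$ corresponding to individual high-level actions, collapse each block via a geometric-series identity, and finish by identifying $\gamma_l^k$ with $\gamma_h$ under the stated regime.

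First I would substitute the auxiliary reward $r_i^l = \frac{1}{k} A_h(s_t^h, a_t^h)$ (constant for $t \leq i < t+k$) into
$$\eta_l(\tilde\pi_l) = \mathbb{E}_{(s^l_i,a^l_i)\sim(\tilde\pi_l,\pi_h)}\left[\sum_{i=0}^\infty \gamma_l^i r_l(s_i^l,a_i^l)\right],$$
then partition the index $i$ by the owning high-level step $j = \lfloor i/k \rfloor$. Because the reward is constant within each block, the inner sum $\sum_{i=jk}^{(j+1)k-1}\gamma_l^i$ collapses to the closed form $\gamma_l^{jk}\cdot\frac{1-\gamma_l^k}{1-\gamma_l}$, giving
$$\eta_l(\tilde\pi_l) \;=\; \frac{1-\gamma_l^k}{k(1-\gamma_l)}\,\mathbb{E}_{\tau_h\sim(\tilde\pi_l,\pi_h)}\!\left[\sum_{j=0}^\infty \gamma_l^{jk}\, A_h(s_{jk}^h, a_{jk}^h)\right].$$

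Next I would invoke the hypothesis that $\gamma_l,\gamma_h$ are close to $1$ with $k$ moderate, under which the effective per-high-level-step discount $\gamma_l^k$ and $\gamma_h$ can be identified at leading order. Replacing $\gamma_l^{jk}$ by $\gamma_h^j$ and re-indexing with $t = jk$ (so $\gamma_h^j = \gamma_h^{t/k}$) then produces exactly the sum appearing on the right-hand side of the statement, up to the scalar prefactor.

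The hard part is controlling the approximation uniformly in $j$: the single-step identification $\gamma_l^k \approx \gamma_h$ is only accurate to first order in $(1-\gamma)$, but the error is raised to the $j$-th power and summed over an infinite horizon, so one must argue that the stated regime suppresses the tail enough for the term-by-term replacement to remain valid in aggregate. A clean shortcut is to assume outright that $\gamma_h = \gamma_l^k$, which makes the discount identification exact and reduces the residual issue to constant-factor bookkeeping between $\frac{1-\gamma_l^k}{k(1-\gamma_l)}$ (which tends to $1$ as $\gamma_l\to 1$) and the prefactor $\frac{1-\gamma_l^k}{1-\gamma_l}$ claimed by the lemma. Either way, the structural point the lemma is really after — that $\eta_l(\tilde\pi_l)$ is proportional, up to a strictly positive constant, to the high-level advantage sum in \eqref{eq:jp_oj} — is preserved, which is exactly what the subsequent monotonic-improvement argument needs.
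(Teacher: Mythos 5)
Your proof follows essentially the same route as the paper's: expand the low-level return, partition the time index into blocks of length $k$, collapse each block with the geometric series $\sum_{i=0}^{k-1}\gamma_l^{i}=\frac{1-\gamma_l^{k}}{1-\gamma_l}$, and then identify $\gamma_l^{k}$ with $\gamma_h$ (equivalently $\gamma_l^{t}$ with $\gamma_h^{t/k}$) in the regime where both discount factors are near $1$ and $k$ is moderate. You are in fact slightly more careful than the paper on the bookkeeping: retaining the $\frac{1}{k}$ from the reward definition gives the prefactor $\frac{1-\gamma_l^{k}}{k(1-\gamma_l)}$, whereas the paper silently drops that $\frac{1}{k}$ when substituting the reward (which is why its stated constant is $\frac{1-\gamma_l^{k}}{1-\gamma_l}$); since both constants are strictly positive, this discrepancy is immaterial to the monotonic-improvement argument the lemma feeds into, exactly as you observe.
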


\begin{proof}
We define $\tau$ to be the trajectory sampled with $(\tilde\pi_l,\pi_h)$.
\begin{equation*}
    \tau = s_0^h,a_0^h,s_1^l,a_1^l,...,s_{nk}^h,a_{nk}^h,s_{nk+1}^l,a^l_{nk + 1}...
\end{equation*}
The high-level trajectory is defined as
\begin{equation*}
    \tau_h = s_0^h,a_0^h,s_k^h, a_k^h,...,s_{nk}^h,a_{nk}^h,...
\end{equation*}
The k-step low level trajectory within a single step of $(s_t^h,a_t^h)$ is defined as
\begin{equation*}
    \tau_l(t) = s_{t}^l,a_{t}^l,...s_{t+k-1}^l,a_{t+k-1}^l
\end{equation*}

Now we can write our low level policy's expected start value as
        \begin{align}
            &\mathbb{E}_{s_0^l}[V_l(s_0^l)] = \mathbb{E}_{\tau\sim(\tilde\pi_l,\pi_h)}\Bigg[\sum_{t = 0, 1, 2, ...} \gamma_l^{t} r_l(s_t^l, a_t^l)\Bigg]\\
            =&\mathbb{E}_{\tau_h\sim(\tilde\pi_l,\pi_h)}\Bigg[\sum_{t=0,k,2k,...}\mathbb{E}_{\tau_l(t)\sim(\tilde\pi_l,\pi_h)}\Big[\sum_{i = 0}^{k-1}\gamma_l^{t+i}A_h(s_t^h, a_t^h)\Big]\Bigg]\\
            =&\mathbb{E}_{\tau_h\sim(\tilde\pi_l,\pi_h)}\Bigg[\sum_{t=0,k,2k,...}\sum_{i = 0}^{k-1}\gamma_l^{t+i}A_h(s_t^h, a_t^h)\Bigg]\\
            =&\mathbb{E}_{\tau_h\sim(\tilde\pi_l,\pi_h)}\Bigg[\sum_{t=0,k,2k,...}\gamma_l^t\frac{1-\gamma_l^k}{1-\gamma_l}A_h(s_t^h, a_t^h)\Bigg]\\
            =&\frac{1-\gamma_l^k}{1-\gamma_l}\mathbb{E}_{\tau_h\sim(\tilde\pi_l,\pi_h)}\Bigg[\sum_{t=0,k,2k,...}\gamma_l^tA_h(s_t^h, a_t^h)\Bigg]
        \end{align}
    
    When $\gamma_l$ and $\gamma_h$ are both close to $1$ and $k$ is not exceedingly large, we can approximate $\gamma_l$ with $\gamma_h^{1/k}$ like below
    \begin{equation}
    \begin{aligned}
        &\frac{1-\gamma_l^k}{1-\gamma_l}\mathbb{E}_{\tau_h\sim(\tilde\pi_l,\pi_h)}\Bigg[\sum_{t=0,k,2k,...}\gamma_l^tA_h(s_t^h, a_t^h)\Bigg]\\
            \approx&\frac{1-\gamma_l^k}{1-\gamma_l}\mathbb{E}_{\tau_h\sim(\tilde\pi_l,\pi_h)}\Bigg[\sum_{t=0,k,2k,...}\gamma_h^{t/k}A_h(s_t^h, a_t^h)\Bigg]
    \end{aligned}
    \end{equation}
\end{proof}

\section{Implementation Details}\label{params}
\subsection{Experiments Details}
Parameters that need to be set for these experiments and that would potentially influence the results of our training include number of iterations $N$ used for the pre-training of low-level skills in with SNN4HRL\cite{SNN4hrl}, total low-level step number for experience collection $B$, discount factor for high level $\gamma_h$, discount factor for low level $\gamma_l$,  maximal time steps within an episode $T$, number of low steps within a high step $k_s$ (in the none-annealing case) and initial number of low steps within a high step $k_0$ (in the annealing case). Note that we always define the annealing temperature such that the skill length is fully annealed and no longer changes half way through training (around $\frac{B}{2}$ low steps). We did not perform a grid search on hyperparameters, therefore better performances might be possible for these experiments.

We use the ant agent pre-defined in rllab. The observation of high-level $s^h$ include its ego-observation (joint angles and speed), as well as the perception of walls and goals through a 20-ray range sensor. The observation of low-level $s^l$ consists only of the agent's ego observations.

\begin{tabular}[c]{ |p{3cm}||p{3cm}|p{3cm}|p{3cm}|  }
 \hline 
 \multicolumn{4}{|c|}{Hyperparameters for experiments} \\
 \hline
 Hyperparameter & Ant Maze & Swimmer Maze & Ant Gather \\
 \hline
 N  & $1000$ & $500$ & $3000$\\
 B  & $5\times 10^4$ & $5\times10^5$  & $5\times10^4$ \\
 $\gamma_l$ & $0.99$ & $0.99$ & $0.99$\\
 $\gamma_h$ & $0.99$ & $0.99$ & $0.99$\\
 $k_0$ & $100$  & $1000$ & $100$\\
 $k_s$ & $10$  & $500$  & $10$\\
 $T$ & $1000$  & $5000$ & $1000$\\
 \hline
\end{tabular}

\textbf{Ant Maze}
A ``C''-shaped maze is constructed with multiple $4\times4$ blocks or empty space. The episode terminates when the ant reaches the goal (with a positive reward $1000$), runs out of the maximal number of steps, or trips over (with a negative reward $-10$). The goal is placed with in a $4\times4$ box and we determine the agent has reached the goal once its center of mass is within this box.

\textbf{Swimmer Maze}
Swimmer Maze uses the same maze as the one defined in Ant Maze. The episode terminates when the swimmer reaches the goal (with a positive reward $1000$) or runs out of the maximal number of steps. The swimmer does not trip over.

\textbf{Ant Gather}
Ant Gather uses a $6\times6$ maze with 8 randomly generated food items and 8 randomly generated bombs. If the ant gets a food item it will receive a reward of $1$. If it reaches a bomb it will receive a reward of $-1$. The tripping penalty is $-10$.

\subsection{Training Details}
We use perceptron networks with 2 layers of 32 hidden units for $\pi_h$ and $\pi_l$. For the value functions $V(s)$, we use polynomial estimators $V(s) = W_3s^3 + W_2s^2 + W_1s + W_0$. The step size for TRPO is 0.01.

\subsection{Pre-training}
The number of skills pre-trained is 6. Maximum path length for ant is $50,000$ and for swimmer is $500$. The hyperparameter $\alpha_H$ defined in SNN4HRL, which is a coefficient used for the MI bonus, is set to 1.

\subsection{Transfer Learning}
There is no annealing in both transfer experiments, and the skill length is always $10$. For transfer task (a) (mirrored maze), hyperparameters are the same as those used in Ant Maze. For transfer task (b) (spiral maze), $B = 5 \times 10^5$ and $T = 1300$. Hyperparameters not mentioned here are defaulted to the same hyperparmeters used in Ant Maze.

\section{Additional Experiment Results}\label{exp}
\subsection{Concurrent Optimization versus Alternate Optimization}
\label{appendix:concurrent}

\begin{figure}[!ht]
    \centering
    \includegraphics[width=0.9\textwidth]{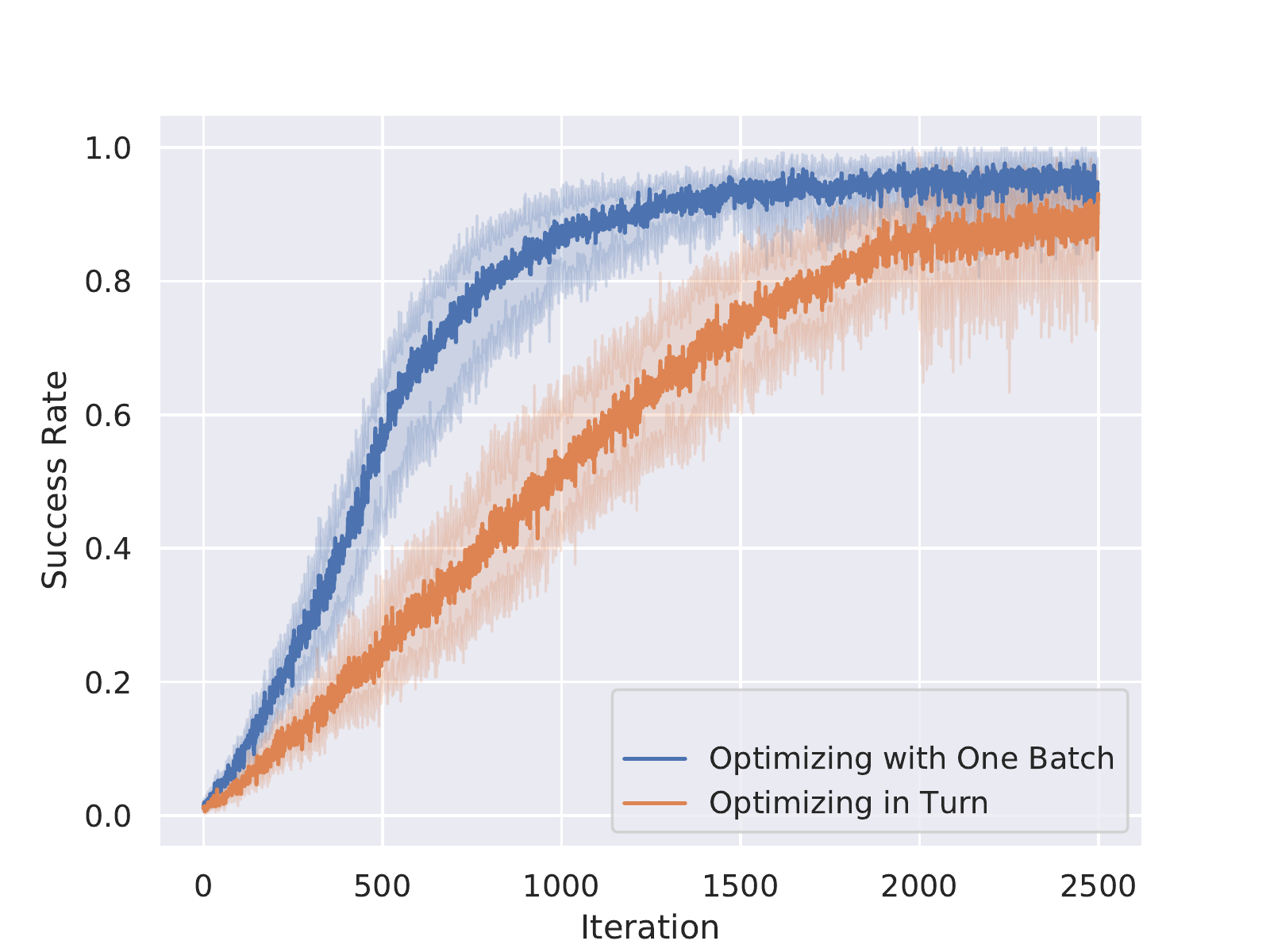}
    \caption{Comparison between the concurrent optimization and alternate optimization scheme in Ant Maze task.}
    \label{fig:optimizing}
\end{figure}

Recall that we make an important assumption at the beginning of the proof. We assume $\pi_h$ is fixed when we optimize $\pi_l$, and $\pi_l$ is fixed when we optimize $\pi_h$. This assumption holds if, with a batch of experience, we optimize either $\pi_h$ or $\pi_l$, but not both. One example of this training scheme is to collect one batch of experience and optimize $\pi_h$, then collect another batch and optimize $\pi_l$, and repeatedly carry out this procedure.

However, in practice, we optimize both $\pi_h$ and $\pi_l$ \textit{concurrently} with a single batch of collected experience. To study the effect of this approximation on our training performance, we compare the training curves of our method with the rigorous method which trains two policies \textit{alternately}.

As is shown in Figure \ref{fig:optimizing}, optimizing both policies concurrently is approximately twice as fast as optimizing either policy alternately. The choice of optimization scheme does not affect the convergence value of final success rate. Therefore, concurrent training does not downgrade the performance, and it is safe to use it for higher sample efficiency.

\subsection{Analysis of Swimmer Maze}
Similar to the Ant Maze task, the agent in Swimmer Maze task also demonstrates meaningful low-level skills after training. 
In Figure \ref{fig:swimmer3}(a)(b), the swimmer is initialized at the center of the area and placed horizontally. We plot a set of trajectories of the swimmer resulted by carrying out a single skill for certain amount of steps. (c) demonstrates several trajectories of the swimmer in the task, going from start (S) to goal (G). We can see that the swimmer learns to modify its skills for more effective movement.

\begin{figure}[htbp]
    \centering
    \includegraphics[width=0.9\textwidth]{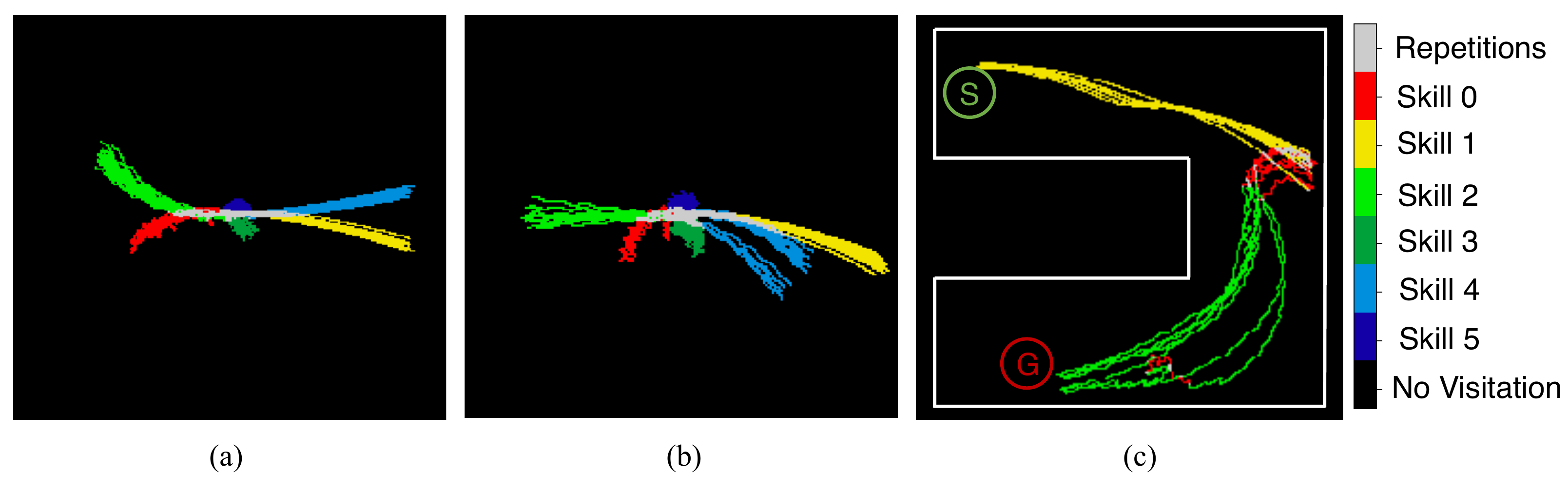}
    \caption{Illustration of the swimmer's skills. (a) visitation plot of the swimmer's skills before training (b) visitation plot of swimmer's skills after training (c) several trajectories of the swimmer in the maze. }
    \label{fig:swimmer3}
\end{figure}

\subsection{Starting with Random Low-Level Skills}
We initialize the low-level skills with random policies and compare with SNN4HRL in the Ant Maze task.
 Even with random low-level skill initialization, HAAR outperforms SNN4HRL, shown in Figure \ref{fig:scratch} below. More reasonable initialization results in better performance. We will explore the effects of other initialization schemes in future works.

\begin{figure}[htbp]
    \centering
    \includegraphics[width=0.9\textwidth]{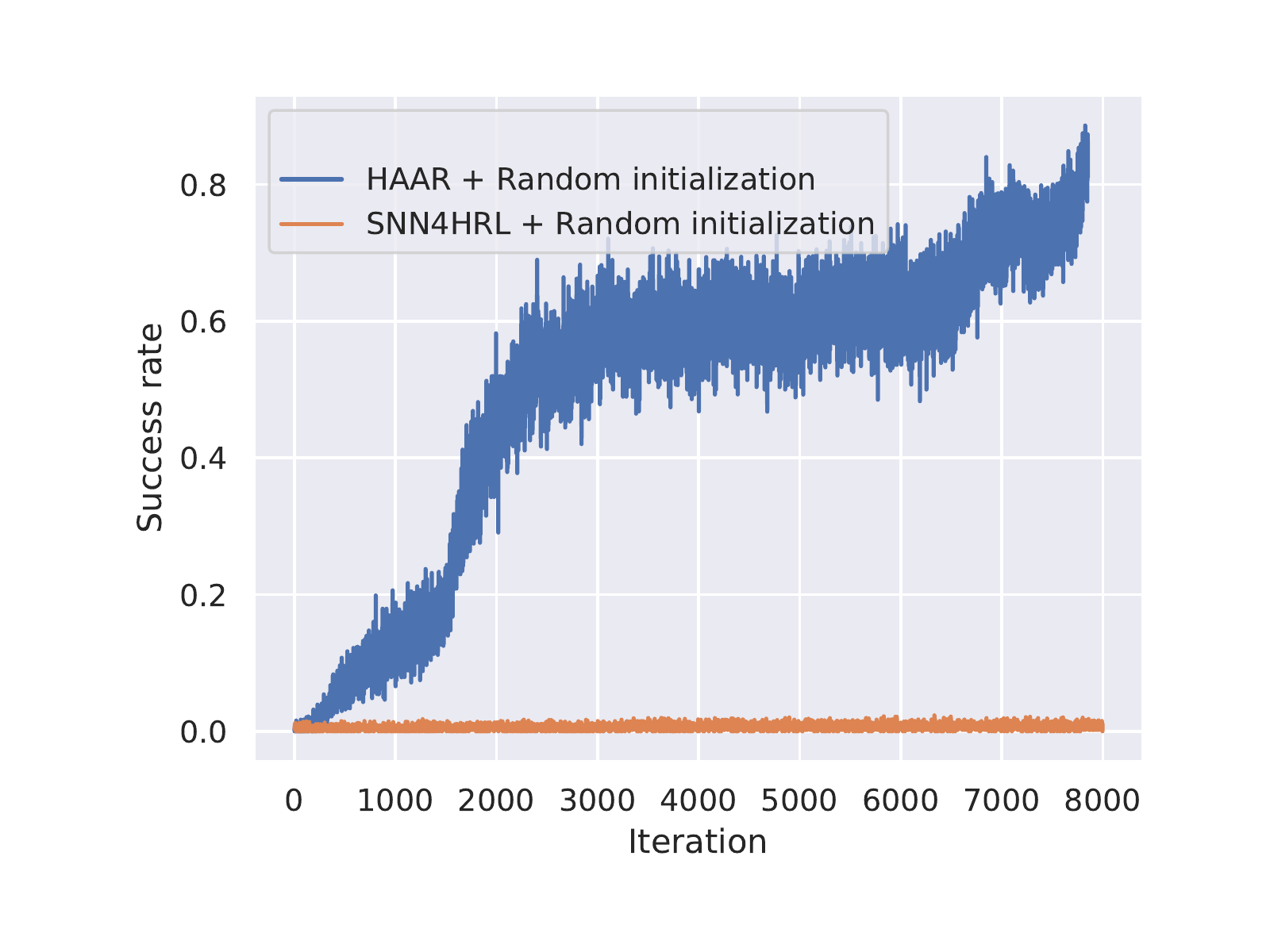}
    \caption{Comparison with SNN4HRL when starting with random low-level skills in the Ant Maze task. }
    \label{fig:scratch}
\end{figure}

	
\end{document}